\begin{document}
\frontmatter          
\pagestyle{headings}  
%
\mainmatter              
\title{Probabilistic Collision Constraint for Motion Planning in Dynamic Environments}
%
%
\author{Antony Thomas \and Fulvio Mastrogiovanni \and Marco Baglietto}
\authorrunning{Antony Thomas et al.} 
%

%
\institute{Department of Informatics, Bioengineering, Robotics, and Systems Engineering, University of Genoa, Via All'Opera Pia 13, 16145 Genoa, Italy.\\
\email{antony.thomas@dibris.unige.it, fulvio.mastrogiovanni@unige.it,  marco.baglietto@unige.it}}

\maketitle              

\begin{abstract}
Online generation of collision free trajectories is of prime importance for autonomous navigation. Dynamic environments, robot motion and sensing uncertainties adds further challenges to collision avoidance systems. This paper presents an approach for collision avoidance in dynamic environments, incorporating robot and obstacle state uncertainties. We derive a tight upper bound for collision probability between robot and obstacle and formulate it as a motion planning constraint which is solvable in real time. The proposed approach is tested in simulation considering mobile robots as well as quadrotors to demonstrate that successful collision avoidance is achieved in real time application. We also provide a comparison of our approach with several state-of-the-art methods. 
\end{abstract}

\section{Introduction}
Safe and reliable motion planning is an important problem in many robotic applications. In many real-world applications, for example, in crowded and dynamic environments such as factories or living spaces, robots are in close proximity to humans and other robots thus making online computation of collision free trajectories vital. However, robot state estimates are often uncertain due to actuation errors, imperfect sensing, and incomplete knowledge of the environment. This leads to reasoning regarding these uncertainties during motion planning, localization and while estimating the motions of other robots or humans. Therefore, the probability of colliding with obstacles is thus computed incorporating these uncertainties. Furthermore, the collision probability computation needs to be reasonably fast to be operable in real time. 

Various methods exist to model the uncertainties arising during collision avoidance planning. Similarly to other approaches~\cite{dutoit2011IEEE,patil2012ICRA,park2018IEEE,axelrod2018IJRR,zhu2019RAL,thomas2020IRIM}, in this paper we model the uncertainties using Gaussian distributions. The robot and obstacle locations are thus parameterized as Gaussian probability distribution functions (pdfs). Moreover, uncertain environments are such that they often preclude the existence of collision free trajectories~\cite{aoude2013AR}. For example, the Gaussian pdf representation assigns to both robot and obstacle locations a non-zero probability anywhere in the environment. Thus, suitable trajectories are often computed such that the collision probability falls within some threshold. Yet, most approaches tend to over estimate the collision probability and thereby compute sub-optimal trajectories or in some cases pronounce plans to be infeasible. Thus, to compute safe and efficient trajectories, computing tight collision probability bounds is essential.

In this paper, we develop an accurate constraint for collision avoidance during motion planning. For a specific collision probability threshold, the collision avoidance constraint can then be used for online Model Predictive Control (MPC) optimization. To be robust to uncertain environments, robot motion and sensing uncertainties (and obstacle uncertainties) are incorporated by propagating the uncertainties within the MPC framework. The resulting collision probability bound is more tighter as compared to prior approaches. The proposed approach is valid in both 2D and 3D domains subject to the assumption of spherical geometry for robot, obstacles and we evaluate our method in Gazebo based simulations (see Fig.~\ref{fig:exp}) in multi-robot setting with both mobile robots and quadrotors. 
\begin{figure}[t!]
  \subfloat[]{\includegraphics[scale=0.14]{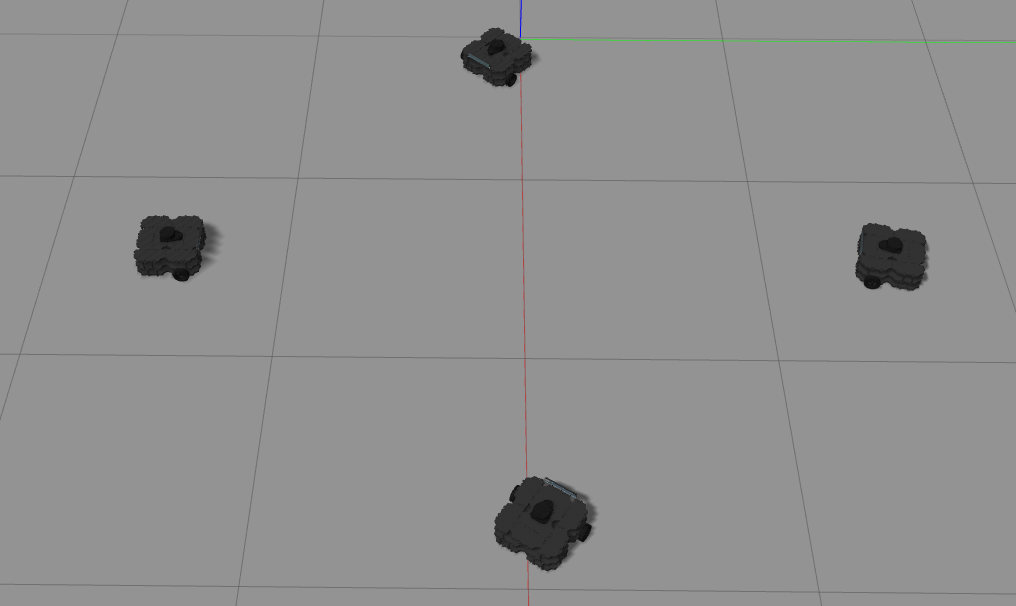}\label{fig:turtlebot}}\hspace{0.2cm}
  \subfloat[]{\includegraphics[scale=0.171]{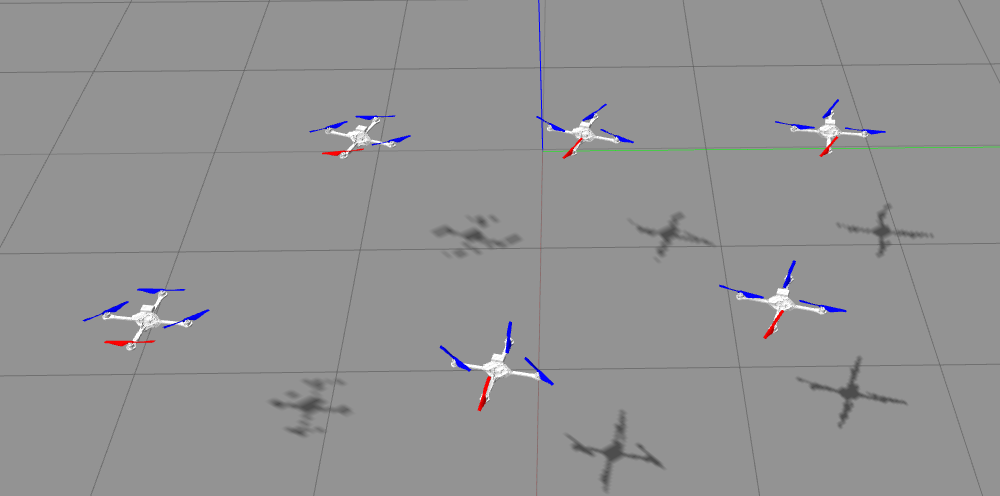}\label{fig:hummingbird}}
   \caption{Snapshots from experiments with a team of (a) mobile robots and (b) quadrotors.}
  \label{fig:exp}
\end{figure}

The paper is organized as follows. In Section~\ref{sec:related} we discuss some of the prior approaches for probabilistic collision avoidance. Notations are introduced in Section~\ref{sec:prelim}. We derive our probabilistic collision constraint and provide a comparison with other prior methods in Section~\ref{sec:approach}. In Section~\ref{sec:results} we validate our method under different multi-robot scenarios. 

\section{Related Work}
\label{sec:related}
In this section we provide a brief overview of existing works on collision probability computation for motion planning. Several approaches exist that incorporate various uncertainties to compute collision probabilities. The starting point of all approaches is to formulate the collision constraint, that is, the conditions under which a collision occurs between a robot and an obstacle. The methods differ in their formulation of collision constraint due to different assumptions regarding the shape of the robot and the obstacles (for example, point, spherical, ellipsoidal, or rectangular shapes for the robot and the obstacle), modeling of uncertainties. Overall, most approaches tend to be overly conservative and provide loose upper bounds. This can lead to sub-optimal plans or in some case render plans infeasible and thus it is desirable to compute accurate collision probabilities to ensure safe and efficient trajectories. 

Bounding volume approaches enlarge the robot and obstacles with a larger volume that them, mostly employing their 3-$\sigma$ uncertainty ellipsoids~\cite{bry2011ICRA,kamel2017IROS}. In~\cite{park2012ICAPS} exact collision checking with conservative bounds on the position of the moving obstacles is performed. Convex hulls of robot links enlarged according to the sigma standard deviation are used in~\cite{lee2013IROS}. An upper bound for collision probability is computed using rectangular bounding boxes for both the robot and the obstacle in~\cite{hardy2013TRO}. Patil \textit{et al.}~\cite{patil2012ICRA} truncate the \textit{a priori} Gaussian robot state distributions~\cite{johnson1994truncatedGaussian} that collide with the obstacle. The truncated distributions account for collision free samples and are then propagated to obtain collision probability estimates. Truncating propagated distributions is also employed in~\cite{liu2014ICRA} to compute risk-aware and asymptotically optimal trajectories.  

Exact collision probability can be computed by marginalizing the joint distribution between the robot and obstacle locations~\cite{dutoit2010ICRA}. This integration is then performed over the set of robot and obstacle locations that satisfy the collision constraint. However, there is no closed form solution to the integral and numerical integration or Monte Carlo (MC) techniques are employed~\cite{schmerling2017RSS}. Yet, numerical integration tend to be computationally expensive and therefore approximate MC methods are employed. MC integration results in a double summation and this is approximated to a single summation in~\cite{lambert2008ICCARV}. Another related approach is the Monte Carlo Motion Planning (MCMP) approach of Janson \textit{et al.}~\cite{janson2018ISRR}. They solve a deterministic motion planning problem with inflated obstacles. Most often, it is convenient to model the collision condition as a constraint of an optimal control problem. Yet, MC approaches tend to be computationally expensive and hard to model within an optimization framework. Assuming that the robot size is negligible, the integration can be approximated to a product of the joint distribution evaluated at the robot and obstacle mean locations and the volume occupied by the robot. Du Toit and Burdick~\cite{dutoit2011IEEE} compute the joint density evaluated at the robot center, whereas in~\cite{park2018IEEE} an upper bound is computed by evaluating the density at the surface of the robot. 

Assuming the robot and the obstacles to be spherical objects, the distance between spheres is used to formulate the collision constraint as a quadratic form in random variables in~\cite{thomas2020IRIM,thomas2021ISR}. The cumulative distribution function (cdf) of the quadratic form gives the required collision probability. Another popular approach is based on chance-constraints. These methods find an optimal sequence of control inputs subject to constraints, that is, collision probability thresholds~\cite{blackmore2011TRO}. Convex polygonal obstacles are considered in~\cite{blackmore2011TRO} and a Gaussian parameterization allows for obtaining constraints in terms of the mean and covariance of the robot and obstacle locations. Chance-constraints are employed in~\cite{zhu2019RAL} by linearizing the nonlinear collision constraints to derive an approximate upper bound. Newton's method combined with chance-constraints is used in~\cite{sun2016ISRR} to obtain an upper bound for collision probability. Future obstacle trajectories are predicted using a Gaussian Process (GP) based technique that learns the mapping from states to trajectory derivatives in~\cite{aoude2013AR}. In~\cite{frey2020RSS} the first-exist times for Brownian motions are leveraged to compute collision probabilities.~\cite{axelrod2018IJRR} focus exclusively on environment uncertainty and formalize a notion of \textit{shadows}, a geometric equivalent of confidence intervals for uncertain obstacles. Though shadows fundamentally give rise to loose bounds, the computational complexity of bounding the collision probability is greatly reduced. To incorporate uncertainties, obstacles are modelled as polytopes with Gaussian-distributed faces in~\cite{shimanuki2018WAFR}. 

Risk-aware motion planning assigns risk to certain regions and paths are considered by the planning algorithm only if the risk obtained in different regions are below some predefined thresholds~\cite{chow2017JMLR}. Planning a collision-free path of a robot in the presence of \textit{risk zones} by penalizing the time spent in these zones is presented in~\cite{salzman2017ICAPS}. Jasour \textit{et al.}\cite{jasour2019RSS} employ risk contours map that takes into account the risk information (uncertainties in location, size and geometry of obstacles) to obtain safe paths with bounded risks. A related approach for randomly moving obstacles is investigated in~\cite{hakobyan2019RAL}. Collision avoidance in the context of human-robot interaction are presented in~\cite{bajcsy2019ICRA,fridovich2020IJRR}. Formal verification methods have also been used to construct safe plans~\cite{ding2013ICRA, sadigh2016RSS}. 

\section{Preliminaries}
\label{sec:prelim}
Throughout this paper vectors will be assumed to be column vectors and will be denoted by bold lower case letters, that is, $\textbf{x}$ and its components will be denoted by lower case letters. Transpose of $\textbf{x}$ will be denoted by $\textbf{x}^T$ and its Euclidean norm by $\norm{\textbf{x}} = \sqrt{\textbf{x}^T\textbf{x}}$. The mean of a random vector will be denoted by $\bm{\mu}$, the corresponding covariance by $\Sigma$ and the expected value by $\EX(\B{x})$. A multivariate Gaussian distribution of $\B{x}$ with mean $\bm{\mu}$ and covariance $\Sigma$ will be denoted using the notation $\B{x} \sim \mathcal{N}(\bm{\mu}, \Sigma)$. Matrices will be denoted by capital letters, that is, $M$, with its trace denoted by $tr(M)$. The identity matrix will be denoted by $I$ or $I_n$ when the dimension needs to be stressed. A diagonal matrix with diagonal elements $\lambda_1, \ldots, \lambda_n$ will be denoted by $diag(\lambda_1, \ldots, \lambda_n)$. Sets will be denoted using mathcal fonts, that is, $\mathcal{S}$. Unless otherwise mentioned, subscripts on vectors/matrices will be used to denote time indexes and (whenever necessary) superscripts will be used to indicate the robot or the object that it represents. For example, $\textbf{x}_k^i$ represents the state of robot $i$ at time $k$. The notation $P(\cdot)$ will be used to denote the probability of an event and the pdf will be denoted by $p(\cdot)$.

At any time $k$, we denote the robot state by $\textbf{x}_k$, the acquired measurement from objects is denoted by $\textbf{z}_k$ and the applied control action is denoted as $\textbf{u}_k$. We also make the following assumptions: (1) the uncertainties are modeled using Gaussian distributions, (2) the robot and obstacles are assumed to be non-deformable spherical objects.  

To describe the dynamics of the robot, we consider a standard motion model with Gaussian distributed noise
\begin{equation}
\textbf{x}_{k+1} = f(\textbf{x}_k,\textbf{u}_k) + n_{k}\  ,  \ n_{k} \sim \mathcal{N}(0,R_{k})
\label{eq:odometry_model}
\end{equation}
\noindent where $n_k$ is the random unobservable noise, modeled as a zero mean Gaussian distribution. Objects are detected through the robot's sensors and assuming known data association, the observation model can be written as  
\begin{equation}
\B{z}_k = h(\B{x}_k) + v_k \  ,  \ v_k \sim \mathcal{N}(0,Q_k)
\label{eq:measurement_model}
\end{equation}

\section{Probabilistic Collision Avoidance}
\label{sec:approach}
\subsection{Collision Constraint}
We denote by $\mathcal{R}$ the set of all points occupied by a rigid-body robot at any given time. Similarly, let $\mathcal{S}$ represent the set of all points occupied by a rigid-body obstacle. A collision occurs if there exits a point such that it is in both $\mathcal{R}$ and $\mathcal{S}$. Thus the collision condition is defined as
\begin{equation}
\mathcal{R} \cap \mathcal{S} \neq \{\phi\}
\end{equation}
\noindent and we denote the probability of collision as $P\left(\mathcal{R} \cap \mathcal{S} \neq \{\phi\}\right)$. In this work we assume spherical geometries for $\mathcal{R}$ and $\mathcal{S}$ with radii $r_1$ and $s_1$, respectively. We assign body-fixed reference frames to robot and obstacle centers located at $\B{x}_k$ and $\B{s}_k$, respectively in the global frame. By abuse of notation we will use $\B{x}_k$ and $\B{s}_k$ equivalently to $\mathcal{R}$ and $\mathcal{S}$. However, when we talk about the distribution of their locations, we refer to the distribution of their centers (the body-fixed frame). The collision condition is thus defined in terms of the body-fixed frames as
\begin{equation}
\mathcal{C}_{\B{x}_k,\B{s}_k}: \mathcal{R} \cap \mathcal{S} \neq \{\phi\}
\end{equation}
We recall here that the locations of the robot and the obstacles are in general uncertain. Let us now consider a robot and an obstacle at any given time instant $k$, distributed according to the Gaussians $\B{x}_k \sim \mathcal{N}\left(\bm{\mu}_{\B{x}_k},\Sigma_{\B{x}_k} \right)$ and $\B{s}_k \sim \mathcal{N}\left(\bm{\mu}_{\B{s}_k},\Sigma_{\B{s}_k}\right)$, respectively. Since the robot and the obstacles are assumed to be spherical objects, the collision constraint is written as
\begin{equation}
\norm{\B{x}_k -\B{s}_k}^2 \leq (r_1+s_1)^2
\label{eq:coll_condition}
\end{equation} 
\noindent Thus~\ref{eq:coll_condition} is equivalent to $\mathcal{C}_{\B{x}_k,\B{s}_k}$. Let us denote the difference between the two random variables by $\B{w} = \B{x}_k -\B{s}_k$. Using the expression for the difference between two Gaussian distributions, we have $\B{w}  \sim \mathcal{N} \left(\bm{\mu}_{\B{x}_k} - \bm{\mu}_{\B{s}_k}, \Sigma_{\B{x}_k} + \Sigma_{\B{s}_k} \right)$. The collision constraint in~(\ref{eq:coll_condition}) can now be written in terms of $\B{w}$,
\begin{equation}
\B{y} = \norm{\B{w}}^2 = \B{w}^T\B{w} \leq (r_1+s_1)^2
\label{eq:collision}
\end{equation}
\noindent where $\B{y}$ is a random vector distributed according to the squared $L_2$-norm of $\B{w}$. 

\begin{proposition}
A symmetric matrix $A \in \mathbb{R}^{n\times n}$ with orthonormal eigenvectors $q_i$ can be factorized as
\begin{equation}
A = Q\Lambda Q^T
\end{equation}
\noindent where the columns of $Q$ correspond to the orthonormal eigenvectors $q_i$ and $\Lambda$ is a diagonal matrix comprised of the corresponding eigenvalues of $A$.
\label{prop:eigen_decom}
\end{proposition}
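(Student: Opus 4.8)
The plan is to reduce the claimed factorization to the single matrix identity $AQ = Q\Lambda$ and then invert $Q$. First I would assemble $Q = [\,q_1 \mid q_2 \mid \cdots \mid q_n\,]$ with the orthonormal eigenvectors as columns and set $\Lambda = diag(\lambda_1,\ldots,\lambda_n)$, where $Aq_i = \lambda_i q_i$. Reading the product column by column, $AQ = [\,Aq_1 \mid \cdots \mid Aq_n\,] = [\,\lambda_1 q_1 \mid \cdots \mid \lambda_n q_n\,] = Q\Lambda$, the last step because right-multiplication by a diagonal matrix rescales columns. Orthonormality of the $q_i$ is exactly the statement $Q^TQ = I_n$, so $Q$ is invertible with $Q^{-1} = Q^T$; right-multiplying $AQ = Q\Lambda$ by $Q^T$ then yields $A = Q\Lambda Q^T$.

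What remains, and what I expect to be the only substantive part, is to justify that a symmetric $A$ really does admit a full set of $n$ orthonormal eigenvectors. I would argue by induction on $n$, the case $n=1$ being trivial. For the inductive step, I would first produce a real eigenvalue: the characteristic polynomial has a root $\lambda \in \mathbb{C}$ with eigenvector $v \in \mathbb{C}^n$, and the scalar $\bar v^T A v$ is real because it equals its own complex conjugate (using $A = A^T$ with $A$ real), while it also equals $\lambda\,\bar v^T v = \lambda\norm{v}^2$; since $\norm{v}^2 > 0$ this forces $\lambda \in \mathbb{R}$, so we may take $v$ real and of unit norm and call it $q_1$.

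The key structural observation is that the orthogonal complement $q_1^{\perp}$ is $A$-invariant: if $x^Tq_1 = 0$ then $(Ax)^Tq_1 = x^TAq_1 = \lambda_1 x^Tq_1 = 0$, again by symmetry. Choosing an orthonormal basis of $q_1^{\perp}$ and expressing the restriction of $A$ to it gives a symmetric $(n-1)\times(n-1)$ matrix, to which the inductive hypothesis applies; pulling the resulting orthonormal eigenvectors back gives $q_2,\ldots,q_n$, eigenvectors of $A$ lying in $q_1^{\perp}$, so $\{q_1,\ldots,q_n\}$ is the desired orthonormal eigenbasis and the factorization follows from the first paragraph. The one point to watch is repeated eigenvalues, but this is handled automatically: the induction never assumes distinctness, and mutual orthogonality is enforced by always descending into the orthogonal complement rather than by separating distinct eigenspaces. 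If instead the hypothesis of the proposition is read as already granting the orthonormal eigenvectors, only the first paragraph is needed.
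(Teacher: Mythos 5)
Your proof is correct, but note that the paper itself offers no proof of this proposition at all: it is stated as a standing fact (the spectral theorem for real symmetric matrices) and simply invoked in the proof of the subsequent lemma, where only the identity $\B{x}^TA\B{x} = \B{x}^TQ\Lambda Q^T\B{x}$ is needed. Your first paragraph --- assembling $Q$, reading $AQ = Q\Lambda$ column by column, and inverting via $Q^{-1}=Q^T$ from orthonormality --- is exactly what the literal hypothesis of the proposition requires, since the statement already grants the orthonormal eigenvectors $q_i$; you correctly observe this yourself at the end. The remainder of your argument (real eigenvalue via the conjugation trick on $\bar v^TAv$, $A$-invariance of $q_1^{\perp}$ by symmetry, induction on the restriction, with repeated eigenvalues handled automatically by descending into the complement) is the standard and complete proof of the existence of a full orthonormal eigenbasis, so you have in fact proved strictly more than the paper assumes. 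The only cosmetic remark is that "we may take $v$ real" deserves the one-line justification that $A-\lambda I$ is a real singular matrix once $\lambda\in\mathbb{R}$, hence has a nontrivial real null space; with that noted, the argument is airtight.
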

\begin{lemma}
For a symmetric matrix $A \in \mathbb{R}^{n\times n}$ and a random vector $\B{x}$, we have
\begin{equation}
\B{x}^TA\B{x} \leq \lambda_{max}\norm{\B{x}}^2
\end{equation}
\noindent where $\lambda_{max}$ is the maximum eigenvalue of $A$.
\label{lm:inequality}
\end{lemma}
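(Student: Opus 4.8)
The plan is to reduce the quadratic form to a weighted sum of squares via the spectral decomposition supplied by Proposition~\ref{prop:eigen_decom}, and then bound each weight by $\lambda_{max}$. Note first that although $\B{x}$ is described as a random vector, the claimed inequality is a deterministic statement that I will establish for every realization of $\B{x} \in \mathbb{R}^n$; randomness plays no role here.

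First I would apply Proposition~\ref{prop:eigen_decom} to write $A = Q\Lambda Q^T$ with $Q$ orthogonal (columns the orthonormal eigenvectors $q_i$) and $\Lambda = diag(\lambda_1,\ldots,\lambda_n)$ the eigenvalues of $A$. Substituting into the quadratic form gives $\B{x}^T A \B{x} = \B{x}^T Q \Lambda Q^T \B{x}$. The key change of variables is to set $\B{y} = Q^T\B{x}$, so that $\B{x}^T A \B{x} = \B{y}^T \Lambda \B{y} = \sum_{i=1}^n \lambda_i y_i^2$.

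Next I would bound the sum termwise: since $y_i^2 \geq 0$ and $\lambda_i \leq \lambda_{max}$ for every $i$, we get $\sum_{i=1}^n \lambda_i y_i^2 \leq \lambda_{max}\sum_{i=1}^n y_i^2 = \lambda_{max}\norm{\B{y}}^2$. Finally, orthogonality of $Q$ gives $\norm{\B{y}}^2 = \B{y}^T\B{y} = \B{x}^T Q Q^T \B{x} = \B{x}^T\B{x} = \norm{\B{x}}^2$, which combined with the previous line yields $\B{x}^T A \B{x} \leq \lambda_{max}\norm{\B{x}}^2$ as required.

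There is no real obstacle in this argument; the only points that warrant a line of care are (i) that $Q^TQ = QQ^T = I$ is exactly what makes the norm invariant under the substitution, and (ii) that the termwise bound is valid precisely because each $y_i^2$ is nonnegative, so no sign issues arise even when $A$ has negative eigenvalues. If one later wants the expected-value version $\EX(\B{x}^TA\B{x}) \leq \lambda_{max}\,\EX(\norm{\B{x}}^2)$, it follows immediately by taking expectations of the pointwise inequality and using monotonicity of $\EX(\cdot)$.
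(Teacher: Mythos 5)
Your proof is correct and follows essentially the same route as the paper: spectral decomposition $A = Q\Lambda Q^T$, the substitution $\B{y} = Q^T\B{x}$ (the paper writes the same sum as $\sum_i \lambda_i (q_i^T\B{x})^2$), a termwise bound by $\lambda_{max}$, and orthonormality of the eigenvectors to identify $\norm{Q^T\B{x}}^2$ with $\norm{\B{x}}^2$. Your explicit remarks on nonnegativity of the $y_i^2$ and on the deterministic nature of the claim are fine additions but do not change the argument.
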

\begin{proof}
From Proposition~\ref{prop:eigen_decom}, we have
\begin{equation*}
\begin{split}
\B{x}^TA\B{x} &= \B{x}^TQ\Lambda Q^T\B{x}
= \left(Q^T\B{x}\right)^T\Lambda \left(Q^T\B{x}\right)
= \sum_{i=1}^n \lambda_i(q_i^T\B{x})^2\\
&\leq \lambda_{max} \sum_{i=1}^n (q_i^T\B{x})^2
= \lambda_{max} \norm{\B{x}}^2\\
\end{split}
\end{equation*}
\noindent where we have used the fact that the eigenvectors are orthonormal.
\end{proof}
\begin{proposition}
For any random variable $\B{x}$, the probability of an event $P(\B{x}\leq x)$ is given by its cumulative distribution function (cdf) $F_{\B{x}}(x)$, that is,
\begin{equation}
F_{\B{x}}(x) = P(\B{x}\leq x), \quad - \infty < x < +\infty
\end{equation}
\end{proposition}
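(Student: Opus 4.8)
The plan is to recognize that this Proposition is, in essence, the \emph{definition} of the cumulative distribution function, so the argument consists of recalling that definition and noting that the equality holds for every real $x$. First I would recall that for a real-valued random variable $\B{x}$ defined on some underlying probability space, the cdf is the function $F_{\B{x}} : \mathbb{R} \to [0,1]$ given by $F_{\B{x}}(x) := P(\{\omega : \B{x}(\omega) \leq x\})$. The displayed equality $F_{\B{x}}(x) = P(\B{x}\leq x)$ is then immediate from this definition, and since the event $\{\B{x}\leq x\}$ makes sense for every $x \in (-\infty,+\infty)$, the identity holds on the whole real line, exactly as claimed.

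To give the statement operational content in the setting of this paper, I would additionally record the link with the probability density function $p_{\B{x}}$ introduced earlier via the notation $p(\cdot)$: when $\B{x}$ is absolutely continuous one has $F_{\B{x}}(x) = \int_{-\infty}^{x} p_{\B{x}}(t)\,dt$, and consequently $P(a < \B{x} \leq b) = F_{\B{x}}(b) - F_{\B{x}}(a)$. This is the version that is actually invoked later, where the collision probability $P\!\left(\B{y} \leq (r_1+s_1)^2\right)$ associated with~(\ref{eq:collision}) is obtained by evaluating the cdf of the quadratic form $\B{y} = \B{w}^T\B{w}$ at the threshold $(r_1+s_1)^2$.

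Since the claim is definitional, there is no genuine obstacle to overcome; the only point deserving a word of care is measurability — the set $\{\B{x}\leq x\}$ must belong to the relevant $\sigma$-algebra for $P(\cdot)$ to be assigned to it — but for the Gaussian variables and the quadratic forms in Gaussians considered throughout the paper this holds automatically, so I would simply note it in passing and conclude.
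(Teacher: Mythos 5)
Your proposal is correct: the proposition is simply the definition of the cumulative distribution function, and the paper accordingly states it without proof, which matches your observation that there is nothing substantive to prove beyond recalling the definition (and, if one wishes, measurability of the event $\{\B{x}\leq x\}$). The additional remarks linking the cdf to the density and to the later evaluation of $P(\B{y}\leq (r_1+s_1)^2)$ are consistent with how the paper uses the statement.
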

\begin{proposition}
For n-dimensional $\B{x} \sim \mathcal{N}(\bm{\mu}, \Sigma)$
\begin{equation}
y = (\B{x}- \bm{\mu})^T\Sigma^{-1}(\B{x}- \bm{\mu}) \sim \chi^2_n
\end{equation}
\noindent where $\chi^2_n$ denotes the chi-squared distribution with
$n$ degrees of freedom.
\end{proposition}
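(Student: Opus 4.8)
The plan is to reduce the quadratic form to a sum of squares of independent standard normal variables by a ``whitening'' change of variables, and then to invoke the definition of the chi-squared distribution. First I would note that $\Sigma$ is symmetric and positive definite (it must be invertible for $\Sigma^{-1}$ to exist), so by Proposition~\ref{prop:eigen_decom} it factorizes as $\Sigma = Q\Lambda Q^T$ with $Q$ orthogonal and $\Lambda = diag(\lambda_1,\ldots,\lambda_n)$, all $\lambda_i > 0$. This lets me define the symmetric positive definite square root $\Sigma^{1/2} = Q\,diag(\sqrt{\lambda_1},\ldots,\sqrt{\lambda_n})\,Q^T$ together with its inverse $\Sigma^{-1/2}$, and set $\B{z} = \Sigma^{-1/2}(\B{x}-\bm{\mu})$.

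Next I would show that $\B{z} \sim \mathcal{N}(0, I_n)$. Since $\B{z}$ is an affine function of the Gaussian vector $\B{x}$, it is itself Gaussian; its mean is $\Sigma^{-1/2}(\EX(\B{x}) - \bm{\mu}) = 0$, and its covariance is $\Sigma^{-1/2}\,\Sigma\,\Sigma^{-1/2} = \Sigma^{-1/2}\Sigma^{1/2}\Sigma^{1/2}\Sigma^{-1/2} = I_n$. Hence the components $z_1,\ldots,z_n$ are jointly Gaussian with zero pairwise covariance, therefore mutually independent, each with $z_i \sim \mathcal{N}(0,1)$. Rewriting the quadratic form then gives $y = (\B{x}-\bm{\mu})^T\Sigma^{-1}(\B{x}-\bm{\mu}) = (\B{x}-\bm{\mu})^T\Sigma^{-1/2}\Sigma^{-1/2}(\B{x}-\bm{\mu}) = \B{z}^T\B{z} = \sum_{i=1}^n z_i^2$. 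By definition the sum of the squares of $n$ independent standard normal random variables follows the chi-squared distribution with $n$ degrees of freedom, so $y \sim \chi^2_n$. If a fully self-contained justification of this last step is wanted, I would instead compute the moment generating function $\EX(e^{t z_i^2}) = (1-2t)^{-1/2}$ for $t < 1/2$, multiply over the independent components to obtain $(1-2t)^{-n/2}$, and identify this as the moment generating function of $\chi^2_n$, which characterizes the distribution uniquely.

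The main obstacle — in fact the only place any care is needed — is the construction of $\Sigma^{-1/2}$ and the verification that $\B{z}$ has identity covariance with independent components; once the whitening step is in place, the remainder is essentially the definition of $\chi^2_n$ (or the short moment generating function computation above).
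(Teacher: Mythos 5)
Your whitening argument is correct in every step (the symmetric square root, the verification that $\B{z} = \Sigma^{-1/2}(\B{x}-\bm{\mu}) \sim \mathcal{N}(0,I_n)$, and the reduction of $y$ to $\sum_{i=1}^n z_i^2$), and it is the standard proof of this fact. The paper itself states this proposition without proof, treating it as a known property of Gaussian quadratic forms, so your derivation simply supplies the textbook justification the authors omitted; there is nothing to flag.
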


Let $F_{chi}$ be the cdf of a chi-squared distribution with $n$ degrees of freedom and let $\B{x} \sim \mathcal{N}(\bm{\mu}, \Sigma)$, then for any $- \infty < x < +\infty$ we have
\begin{equation}
P((\B{x}- \bm{\mu})^T\Sigma^{-1}(\B{x}- \bm{\mu}) \leq x) = F_{chi}(x)
\end{equation} 
Alternatively, for any $0 \leq \epsilon \leq 1$, we have
\begin{equation}
P((\B{x}- \bm{\mu})^T\Sigma^{-1}(\B{x}- \bm{\mu}) \leq F_{chi}^{-1}(\epsilon)) = \epsilon
\end{equation}
\begin{lemma}
Let $\B{x} \sim \mathcal{N}(\bm{\mu}, \Sigma)$, $\norm{\B{x}}^2 \leq \alpha $, $F_{chi}$ be the cdf of a chi-squared distribution and
\begin{equation*}
P((\B{x}- \bm{\mu})^T\Sigma^{-1}(\B{x}- \bm{\mu}) \leq F_{chi}^{-1}(\epsilon)) = \epsilon
\end{equation*}
\noindent Then $\lambda_{max}\left(\alpha - 2\B{x}^T\bm{\mu} + \bm{\mu}^T\bm{\mu}\right) \leq F_{chi}^{-1}(\epsilon)$, where $\lambda_{max}$ is the maximum eigenvalue of $\Sigma^{-1}$.
\label{lm:prob_limit}
\end{lemma}
\begin{proof}
From Lemma~\ref{lm:inequality}, it follows that
\begin{equation}
(\B{x}- \bm{\mu})^T\Sigma^{-1}(\B{x}- \bm{\mu}) \leq \lambda_{max}\norm{\B{x}- \bm{\mu}}^2
\label{eq:norm_max}
\end{equation}
\noindent where $\lambda_{max}$ is the maximum eigenvalue of $\Sigma^{-1}$. Expanding the right-hand side of~(\ref{eq:norm_max}) and using the fact that $\norm{\B{x}}^2 \leq \alpha $, we get
\begin{equation}
(\B{x}- \bm{\mu})^T\Sigma^{-1}(\B{x}- \bm{\mu}) \leq \lambda_{max}\left(\alpha - 2\B{x}^T\bm{\mu} + \bm{\mu}^T\bm{\mu}\right)
\end{equation}
\noindent Thus, for $(\B{x}- \bm{\mu})^T\Sigma^{-1}(\B{x}- \bm{\mu}) \leq F_{chi}^{-1}(\epsilon)$ it suffices that $\lambda_{max}\left(\alpha - 2\B{x}^T\bm{\mu} + \bm{\mu}^T\bm{\mu}\right) \leq F_{chi}^{-1}(\epsilon)$.
\end{proof}
Note that the collision constraint in~(\ref{eq:collision}) is of the form $\norm{\B{x}}^2 \leq \alpha $ and this allows us to define a notion of maximum allowable collision probability.
We remind the readers that $\mathcal{C}_{\B{x}_k,\B{s}_k}$ represents the collision condition and is therefore equivalent to~(\ref{eq:coll_condition}). We now define the collision constraint that satisfy the required collision probability threshold.
\begin{lemma}
Given n-dimensional $\B{x} \sim \mathcal{N}(\bm{\mu}, \Sigma)$, and $P(\ \norm{\B{x}}^2 \leq \alpha) \leq \epsilon$, then
\begin{equation}
\lambda_{max}\left(\alpha - 2\B{x}^T\bm{\mu} + \bm{\mu}^T\bm{\mu}\right) \leq F_{chi}^{-1}(\epsilon)
\label{eq:chance_constraint}
\end{equation}
\noindent where $\lambda_{max}$ is the maximum eigenvalue of $\Sigma^{-1}$ and $F_{chi}$ is the cdf of the chi-squared distribution with n degrees of freedom.
\label{lm:constraint}
\end{lemma}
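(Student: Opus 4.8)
The plan is to derive (\ref{eq:chance_constraint}) by the same chain of inequalities used in the proof of Lemma~\ref{lm:prob_limit}, the one new point being that the second hypothesis of that lemma---the identity $P((\B{x}-\bm{\mu})^T\Sigma^{-1}(\B{x}-\bm{\mu}) \leq F_{chi}^{-1}(\epsilon)) = \epsilon$---is now automatic: since $\B{x}\sim\mathcal{N}(\bm{\mu},\Sigma)$ is $n$-dimensional, the Proposition on the chi-squared form gives $(\B{x}-\bm{\mu})^T\Sigma^{-1}(\B{x}-\bm{\mu})\sim\chi^2_n$, and evaluating $F_{chi}$ at the quantile $F_{chi}^{-1}(\epsilon)$ yields exactly that equality. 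So the only genuinely new hypothesis to feed in is the probabilistic collision bound $P(\norm{\B{x}}^2\leq\alpha)\leq\epsilon$, which replaces the deterministic $\norm{\B{x}}^2\leq\alpha$ of Lemma~\ref{lm:prob_limit}.

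First I would take $A=\Sigma^{-1}$, which is symmetric so that Proposition~\ref{prop:eigen_decom} and hence Lemma~\ref{lm:inequality} apply, and write $(\B{x}-\bm{\mu})^T\Sigma^{-1}(\B{x}-\bm{\mu}) \leq \lambda_{max}\norm{\B{x}-\bm{\mu}}^2$ with $\lambda_{max}$ the largest eigenvalue of $\Sigma^{-1}$. Expanding $\norm{\B{x}-\bm{\mu}}^2 = \norm{\B{x}}^2 - 2\B{x}^T\bm{\mu} + \bm{\mu}^T\bm{\mu}$ and restricting to the collision event $\{\norm{\B{x}}^2\leq\alpha\}$---which by hypothesis carries probability at most $\epsilon$---I obtain, on that event, $(\B{x}-\bm{\mu})^T\Sigma^{-1}(\B{x}-\bm{\mu}) \leq \lambda_{max}\bigl(\alpha - 2\B{x}^T\bm{\mu} + \bm{\mu}^T\bm{\mu}\bigr)$, exactly as in Lemma~\ref{lm:prob_limit}.

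Next I would line this pointwise inequality up with the chi-squared identity above: because $F_{chi}^{-1}(\epsilon)$ is precisely the level at which the Mahalanobis form has accumulated probability mass $\epsilon$, and the collision event already has mass at most $\epsilon$, forcing the right-hand side $\lambda_{max}(\alpha - 2\B{x}^T\bm{\mu} + \bm{\mu}^T\bm{\mu})$ below $F_{chi}^{-1}(\epsilon)$ is the condition that keeps the collision event inside the $\epsilon$-mass sublevel set of $(\B{x}-\bm{\mu})^T\Sigma^{-1}(\B{x}-\bm{\mu})$, giving (\ref{eq:chance_constraint}). Since (\ref{eq:collision}) is of the form $\norm{\B{x}}^2\leq\alpha$ with $\alpha=(r_1+s_1)^2$ and covariance $\Sigma_{\B{x}_k}+\Sigma_{\B{s}_k}$, this is exactly the collision-avoidance constraint in terms of the means and the spectrum of the propagated covariance.

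The step I expect to be the crux is the passage from the probabilistic hypothesis $P(\norm{\B{x}}^2\leq\alpha)\leq\epsilon$ to a usable pointwise estimate: the quantity $\alpha - 2\B{x}^T\bm{\mu} + \bm{\mu}^T\bm{\mu}$ still carries the random vector $\B{x}$ through the cross term $\B{x}^T\bm{\mu}$, so one must be careful to state precisely on which event the inequality is asserted and to match the probability level $\epsilon$ of the collision event with the chi-squared quantile $F_{chi}^{-1}(\epsilon)$. This is the same subtlety already present in Lemma~\ref{lm:prob_limit}, and I would resolve it in the same manner, reading (\ref{eq:chance_constraint}) in the sufficiency sense in which it is subsequently used in Section~\ref{sec:approach}.
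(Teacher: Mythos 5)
Your proposal follows essentially the same route as the paper's proof: expand $\norm{\B{x}}^2$ about $\bm{\mu}$, apply Lemma~\ref{lm:inequality} with $A=\Sigma^{-1}$ to dominate the Mahalanobis form by $\lambda_{max}\norm{\B{x}-\bm{\mu}}^2$, invoke the $\chi^2_n$ quantile identity, and conclude via the sufficiency reading of Lemma~\ref{lm:prob_limit}. You are, if anything, more explicit than the paper about the one delicate point---that $\alpha - 2\B{x}^T\bm{\mu} + \bm{\mu}^T\bm{\mu}$ still contains the random vector $\B{x}$ and the inequality is asserted only on the collision event, read as a sufficient condition---so no correction is needed.
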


\begin{proof}
We have
\begin{equation*}
\begin{split}
P(\ \norm{\B{x}}^2 \leq \alpha) &= P(\B{x}^T\B{x} \leq \alpha) = P((\B{x} - \bm{\mu} +\bm{\mu})^T(\B{x} - \bm{\mu} +\bm{\mu}) \leq \alpha)\\
& = P((\B{x} - \bm{\mu})^T(\B{x} - \bm{\mu}) + 2(\B{x}-\bm{\mu})^T\bm{\mu} + \bm{\mu}^T\bm{\mu} \leq \alpha)\\
&= P(\ \norm{\B{x}- \bm{\mu}}^2  \leq \alpha - 2\B{x}^T\bm{\mu} + \bm{\mu}^T\bm{\mu})\\
&= P(\lambda_{max}\norm{\B{x}- \bm{\mu}}^2  \leq \lambda_{max}(\alpha - 2\B{x}^T\bm{\mu} + \bm{\mu}^T\bm{\mu}))
\end{split}
\end{equation*}
\noindent where $\lambda_{max}$ is the maximum eigenvalue of $\Sigma^{-1}$. Now from Lemma~\ref{lm:inequality} it follows that
\begin{multline*}
P(\lambda_{max}\norm{\B{x}- \bm{\mu}}^2  \leq \lambda_{max}(\alpha - 2\B{x}^T\bm{\mu} + \bm{\mu}^T\bm{\mu})) \\= P((\B{x}- \bm{\mu})^T\Sigma^{-1}(\B{x}- \bm{\mu}) \leq \lambda_{max}(\alpha - 2\B{x}^T\bm{\mu} + \bm{\mu}^T\bm{\mu}))
 = P(\norm{\B{x}}^2 \leq \alpha) = \epsilon
\end{multline*}
\noindent The required result then directly follows from Lemma~\ref{lm:prob_limit}.
\end{proof}
Since the collision constraint in~(\ref{eq:collision}) is a Gaussian distribution, for a collision probability threshold of $\epsilon$ we can directly use the constraint in~(\ref{eq:chance_constraint}). We now state the following lemma which is a direct consequence of Lemma~\ref{lm:constraint}.
\begin{lemma}
Given n-dimensional $\B{x} \sim \mathcal{N}(\bm{\mu}, \Sigma)$, and $P(\ \norm{\B{x}}^2 \leq \alpha) \leq \epsilon$, then
\begin{equation}
P(\ \norm{\B{x}}^2 \leq \alpha)  \leq F_{chi}^{-1}\left(\lambda_{max}(\alpha - 2\B{x}^T\bm{\mu} + \bm{\mu}^T\bm{\mu})\right)
\label{eq:corollary}
\end{equation}
\end{lemma}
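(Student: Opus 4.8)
The plan is to observe that this lemma is not a fresh computation but a re-packaging of the inequality chain already assembled inside the proof of Lemma~\ref{lm:constraint}: rather than stopping at the chance-constraint form~(\ref{eq:chance_constraint}), I would simply read off the intermediate bound. First I would recall the exact rewriting of the event used there. Expanding $\B{x} = (\B{x}-\bm{\mu}) + \bm{\mu}$ gives $\norm{\B{x}}^2 = \norm{\B{x}-\bm{\mu}}^2 + 2\B{x}^T\bm{\mu} - \bm{\mu}^T\bm{\mu}$, so the event $\{\norm{\B{x}}^2 \leq \alpha\}$ coincides with $\{\norm{\B{x}-\bm{\mu}}^2 \leq \alpha - 2\B{x}^T\bm{\mu} + \bm{\mu}^T\bm{\mu}\}$, and multiplying the inner inequality by the positive scalar $\lambda_{max}$ (the largest eigenvalue of $\Sigma^{-1}$) leaves this event unchanged. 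Hence $P(\norm{\B{x}}^2 \leq \alpha) = P\bigl(\lambda_{max}\norm{\B{x}-\bm{\mu}}^2 \leq \lambda_{max}(\alpha - 2\B{x}^T\bm{\mu} + \bm{\mu}^T\bm{\mu})\bigr)$.

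Next I would apply Lemma~\ref{lm:inequality} with $A = \Sigma^{-1}$ and the vector $\B{x}-\bm{\mu}$, which yields the pointwise bound $(\B{x}-\bm{\mu})^T\Sigma^{-1}(\B{x}-\bm{\mu}) \leq \lambda_{max}\norm{\B{x}-\bm{\mu}}^2$. Consequently the event on the right-hand side above is contained in $\{(\B{x}-\bm{\mu})^T\Sigma^{-1}(\B{x}-\bm{\mu}) \leq \lambda_{max}(\alpha - 2\B{x}^T\bm{\mu} + \bm{\mu}^T\bm{\mu})\}$, so its probability can only increase. Finally I would invoke the proposition that $(\B{x}-\bm{\mu})^T\Sigma^{-1}(\B{x}-\bm{\mu}) \sim \chi^2_n$ together with the cdf characterization to identify the probability of that last event with $F_{chi}$ evaluated at $\lambda_{max}(\alpha - 2\B{x}^T\bm{\mu} + \bm{\mu}^T\bm{\mu})$. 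Stringing the three steps together delivers $P(\norm{\B{x}}^2 \leq \alpha) \leq F_{chi}\bigl(\lambda_{max}(\alpha - 2\B{x}^T\bm{\mu} + \bm{\mu}^T\bm{\mu})\bigr)$, which is the asserted upper bound — note that, consistently with the reading of the statement as a bound on a collision probability, the quantity on the right is the chi-squared cdf. The hypothesis $P(\norm{\B{x}}^2 \leq \alpha) \leq \epsilon$, combined with Lemma~\ref{lm:constraint}, then merely certifies that this bound is no larger than the prescribed threshold $\epsilon$, so the two facts are consistent.

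The step I expect to need the most care is the bookkeeping around the term $\alpha - 2\B{x}^T\bm{\mu} + \bm{\mu}^T\bm{\mu}$, which still carries the random vector $\B{x}$: I would make explicit that in the motion-planning use-case this quantity is evaluated deterministically at the mean, where it collapses to $\alpha - \norm{\bm{\mu}}^2$, so that both constraint~(\ref{eq:chance_constraint}) and the present bound become genuine deterministic conditions on $\bm{\mu}$, $\Sigma$, and the combined radius $r_1+s_1$. Apart from this, every step is a one-line consequence of an identity or of a result already proved (Lemma~\ref{lm:inequality}, the chi-squared proposition, and the monotonicity of $F_{chi}$), so I do not anticipate a genuine mathematical obstacle.
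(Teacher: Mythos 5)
Your reconstruction follows essentially the paper's route: the paper offers no separate proof for this lemma (it is stated as a direct consequence of Lemma~\ref{lm:constraint}), and what you do is unpack the chain inside that lemma's proof (completing the square, multiplying by $\lambda_{max}>0$, the event inclusion supplied by Lemma~\ref{lm:inequality}, and the $\chi^2_n$ identification) and stop one step earlier to read off a cdf bound. You are also right to read the right-hand side of~(\ref{eq:corollary}) as $F_{chi}$ rather than $F_{chi}^{-1}$: the argument $\lambda_{max}(\alpha - 2\B{x}^T\bm{\mu} + \bm{\mu}^T\bm{\mu})$ is not a probability, so bounding a probability by a quantile evaluated there is ill-formed, and your version is the one consistent with the intent of~(\ref{eq:chance_constraint}) (take the smallest $\epsilon$ for which the constraint holds with equality).

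The one point to correct is the patch you propose for the issue you yourself flag. The threshold $\lambda_{max}(\alpha - 2\B{x}^T\bm{\mu} + \bm{\mu}^T\bm{\mu})$ still contains the random $\B{x}$, so the final step is not literally an evaluation of the $\chi^2_n$ cdf; this imprecision is inherited from the paper (Lemmas~\ref{lm:prob_limit} and~\ref{lm:constraint} carry the same random $\B{x}$ on the nominally deterministic side), so it is not a gap relative to the paper's argument. However, replacing $\B{x}$ by $\bm{\mu}$, so that the argument collapses to $\alpha - \norm{\bm{\mu}}^2$, does not repair it: whenever $\B{x}^T\bm{\mu} < \bm{\mu}^T\bm{\mu}$ on the event, this substitution strictly decreases the threshold, the event inclusion breaks, and the resulting ``bound'' can be false; for instance, if $\alpha < \norm{\bm{\mu}}^2$ it returns $F_{chi}$ of a negative number, i.e.\ zero, while $P(\norm{\B{x}}^2 \leq \alpha) > 0$ for any Gaussian. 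A sound deterministic surrogate comes instead from bounding the random term on the event: $\norm{\B{x}}^2 \leq \alpha$ gives $\abs{\B{x}^T\bm{\mu}} \leq \sqrt{\alpha}\,\norm{\bm{\mu}}$ by Cauchy--Schwarz, hence $\alpha - 2\B{x}^T\bm{\mu} + \bm{\mu}^T\bm{\mu} \leq (\sqrt{\alpha} + \norm{\bm{\mu}})^2$ and
\begin{equation*}
P\left(\norm{\B{x}}^2 \leq \alpha\right) \leq F_{chi}\left(\lambda_{max}\left(\sqrt{\alpha} + \norm{\bm{\mu}}\right)^2\right),
\end{equation*}
which is the deterministic form your argument actually supports (at the price of some looseness). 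With that substitution your proof is complete; everything else matches the paper's own reasoning.
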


\subsection{Objective Function}
We formulate the collision avoidance problem as an optimization problem. At each time instant $k$, the robot plans for $L$ look-ahead steps and minimizes an objective function $J_k$, subject to collision and other constraints. The optimization problem can then be formally stated as 
\begin{equation}
\begin{split}
& \underset{\B{x}_{k:k+L}, \B{u}_{k:k+L-1}}{ \min} \quad J_k\\
& s.t. \quad \textbf{x}_{k+1} = f(\textbf{x}_k,\textbf{u}_k)\\
& \quad \ \, \quad \B{u}_{k+l} \in \B{U} \\
& \quad \ \, \quad P\left(\mathcal{C}_{\B{x}_{k+l},\B{s}_k^i}\right) \leq \epsilon
\end{split}
\label{eq:cost_fn}
\end{equation}
\noindent where 
\begin{equation}
J_k = \sum_{l=0}^{L-1} c_l(\B{x}_{k+l},\B{u}_{k+l}) + c_L(\B{x}_{k+L})
\end{equation}
\noindent with $c_l$ denoting the cost term at time $k+l$ and $c_L$ denoting the terminal cost, $\textbf{x}_{k+1} = f(\textbf{x}_k,\textbf{u}_k)$ is the robot dynamics~(\ref{eq:odometry_model}), $\B{u}_{k+l} \in \B{U}$ constraints the control inputs to lie within the feasible set $\B{U}$ and $P\left(\mathcal{C}_{\B{x}_k,\B{s}_k^i}\right) \leq \epsilon$ enforces a collision probability threshold of $\epsilon$ with obstacles $\B{s}_k^i$. 

We recall here that to determine the constraint $P\left(\mathcal{C}_{\B{x}_{k+l},\B{s}_k^i}\right) \leq \epsilon$ in~(\ref{eq:cost_fn}) it is required to evaluate the constraint in~(\ref{eq:chance_constraint}), which depends on the uncertainty or the covariance at each time step. Thus the uncertainty needs to be propagated at each time step to compute the collision probability constraint. In this paper we use the Extended Kalman Filter (EKF) uncertainty propagation; other approaches can be found in~\cite{luo2017PAS}. Note that the covariance dynamics dependent on the robot state and control inputs and hence require $\frac{L}{2}(n_{\B{x}}^2 + n_{\B{x}})$~\cite{hewing2018ECC} ($n_{\B{x}}$ is the dimension of $\B{x}$) additional variables in the optimization problem, increasing the computation time significantly. Thus, similar to~\cite{hewing2018ECC, zhu2019RAL}, we approximate the uncertainty evolution by propagating the robot uncertainties based on its last-loop state and control inputs.

\subsection{Comparison to Other Approaches}
We provide a comparison with several state-of-the-art methods using a robot and a close-by obstacle. For this comparison, we use a 2D example, however our approach is not limited to 2D scenarios and is equally applicable in 3D scenarios as it can be seen in Section~\ref{sec:results}. The robot is located at $(0.38, 0)$ m with radius $0.2$ m and covariance $diag(0.04, 0.04)$ $\textrm{m}^2$. The obstacle is located at the origin with radius $0.2$ m. The collision probability values can be seen in Table~\ref{table1}. To  validate the value computed using our approach, we compute the exact collision probability by performing numerical integration. Given the current robot state $\B{x}_k$ and the obstacle state $\B{s}_k$, the collision probability is given by
\begin{equation}
P\left(\mathcal{C}_{\B{x}_k,\B{s}_k}\right) = \int_{\B{x}_k} \int_{\B{s}_k} I_c(\B{x}_k,\B{s}_k)p(\B{x}_k,\B{s}_k)
\label{eq:numerical}
\end{equation}
where $I_c$ is an indicator function defined as
\begin{equation}
   I_c(\B{x}_k,\B{s}_k)= 
   \begin{cases}
     1 \ &\text{if} \ \mathcal{R} \cap \mathcal{S} \neq \{\phi\} \\
     0 \ &\text{otherwise}.
   \end{cases}
\end{equation}
\noindent and $p(\B{x}_k,\B{s}_k)$ is the joint distribution of the robot and the obstacle. The numeric integral of~(\ref{eq:numerical}) gives the exact value and is used to compare the tightness of the upper bound computed using our approach. As seen from Table~\ref{table1} the value computed using our approach provides a tighter bound when compared to other approaches. The double summation of numerical integration is approximated to a single summation in~\cite{lambert2008ICCARV} and this results in a much higher value. Other approaches compute loose upper bounds and hence the resulting values are significantly higher. Our approach thus computes a tighter upper bound.

\begin{table}
\small\sf\centering
 \caption{Comparison of collision probability methods.}
\scalebox{1}{
\begin{tabular}{ |c|c|c| } 
 \hline
 Methods & Collision  & Computation \\
 & probability & time (ms)  \\
 \hline 
 Numerical integral & 0.1728 & 9168.9 $\pm$ 258.0 \\ 
  \hline 
   Approximate Numerical integral~\cite{lambert2008ICCARV} & 0.4280 & 18.30 $\pm$ 3.90 \\ 
  \hline 
 Bounding volume~\cite{park2012ICAPS,kamel2017IROS} & 1 & 0.1480 $\pm$ 0.4411\\ 
 \hline
 Maximum probability approximation~\cite{park2018IEEE}  & 1  & 101.6 $\pm$ 23.86 \\
  \hline
 Chance constraint~\cite{zhu2019RAL} & 0.5398 & 0.3917 $\pm$ 0.1278 \\
  \hline
 Rectangular bounding box~\cite{hardy2013TRO} & 0.1601 & 0.067 $\pm$ 0.0070 \\
  \hline
 Our approach & 0.1772 & 0.588 $\pm$ 0.13 \\
  \hline
\end{tabular}}
 \label{table1}
\end{table}

\section{Results}
In this section we describe our implementation and then evaluate the capabilities of our approach. Simulations are performed in the Gazebo environment with mobile robots as well as quadrotors. The mobile robot kinematics is as follows
\begin{equation}
\B{x}_{k+1}  = \begin{bmatrix}
 x_k - \frac{v_k}{\omega_k}\sin(\theta_k) + \frac{v_k}{\omega_k}\sin(\theta_k + \omega_k \Delta t)\\
 y_k + \frac{v_k}{\omega_k}\cos(\theta_k) - \frac{v_k}{\omega_k}\cos(\theta_k + \omega_k \Delta t)\\
 \theta_k +  \omega_k \Delta t
\end{bmatrix} + n_k
\label{eq:kinematics}
\end{equation} 
\noindent where the applied control $\B{u}_k = (v_k, \omega_k)^T$ is made up of the linear and angular velocities and $n_k$ is the noise as defined in Section~\ref{sec:prelim}. We refer the readers to~\cite{falanga2018IROS} for the quadrotor dynamics. The ground truth odometry from Gazebo is used to measure the pose of the robot, mimicking a motion capture system. This measurement is then corrupted with noise which is zero mean and is used to estimate the state of the robots employing an EKF. The optimization of~(\ref{eq:cost_fn}) is set up in ACADO~\cite{Houska2011OCAM} which generates a C++ template to run the MPC problem~(\ref{eq:cost_fn}), and is then modified according to the execution platform. For quadrotor control, we use the publicly available RPG-MPC\footnote{\url{https://github.com/uzh-rpg/rpg_mpc}}~\cite{falanga2018IROS} which is also based on ACADO and modify it to meet our requirements. A look-ahead horizon of $L=1$ second is used with a discretization of 0.1 seconds for mobile robots and for the quadrotors we use $L=2$ seconds. The performance is evaluated on an Intel{\small\textregistered} Core i7-6500U CPU$@$2.50GHz$\times$4 with 8GB RAM under Ubuntu 16.04 LTS.

\noindent \textbf{Comparison to bounding volume approaches:}
Bounding volume methods~\cite{park2012ICAPS,kamel2017IROS} presents a straightforward approach for computing collision probability under uncertainty by enlarging the robot and obstacles by their 3$-\sigma$ ellipsoids. We provide an efficiency comparison of such methods with ours. We consider a scenario in which a mobile robot navigates from $(0, 0)$ m to $(3, 0)$ m with an obstacle of radius $0.2$ m at $(1.5, 0)$ m. We begin with a measurement noise of $\Sigma = diag(0.02 \textrm{m}^2, 0.02 \textrm{m}^2, 1.2\textrm{deg}^2)$ and increase it to 4$\Sigma$ and 16$\Sigma$. We define the following metrics to compare the efficiency: $d-$ minimum distance between the robot and the obstacle, $l-$ total trajectory length, and $T-$ total trajectory duration. The results are shown in Table~\ref{table:result1} where each given value is an average over 10 different simulations. In all the three cases, the trajectory length and duration quantities certify our approach as most efficient. This is more evident as the measurement noise increases as we compute a tight upper bound. 
\begin{table}[t]
\small\sf\centering
 \caption{Collision probability efficiency with varying measurement noise.}
\begin{tabular}{|c|C{0.80cm}C{0.80cm}C{0.80cm}|C{0.75cm}C{0.75cm}C{0.75cm}|C{0.75cm}C{0.75cm}C{0.75cm}|  }
\hline
 {} & \multicolumn{9}{|c|}{Measurement noise} \\\cline{2-10}
 {}  & \multicolumn{3}{|c|}{$\Sigma$} & \multicolumn{3}{|c|}{4$\Sigma$} & \multicolumn{3}{|c|}{16$\Sigma$} \\
 \hline
 Method & $d$(m) & $l$(m) & $T$(s) & $d$(m) & $l$(m) & $T$(s) & $d$(m) & $l$(m) & $T$(s) \\
 \hline
 Bounding volume~\cite{park2012ICAPS,kamel2017IROS} &0.50 & 3.31 &16.19 & 0.70 & 3.90 & 16.34 & 0.81& 4.26 & 16.54\\
 \hline
  Our method & 0.49 & 3.28 & 16.01 & 0.58 & 3.59 & 16.13 & 0.61 & 3.71 & 16.29\\
  \hline
 \end{tabular}
 \label{table:result1}
\end{table}

\noindent \textbf{Mobile robot scenarios:} 
In this setting we consider multiple mobile robots exchanging their initial positions with each robot considering every other robot as a dynamic obstacle. To this end, the trajectory (pose and covariance) of each robot is communicated to other robots. The trajectories for two and four mobile robots exchanging their positions can be seen in Fig.~\ref{fig:mob2} and Fig.~\ref{fig:mob4}. We use a collision probability threshold of $0.1$ and a minimum separation of $0.2$ meters is achieved between the robots. In both the cases, a measurement noise of $\Sigma = diag(0.02 \textrm{m}^2, 0.02 \textrm{m}^2, 1.2\textrm{deg}^2)$ is used to corrupt the ground truth odometry which is then used to estimate the robot states using EKF. The simulation was run 10 times and the robots successfully avoided collisions in all the runs. For collision probability thresholds above $0.2$ the success rate was less $100\%$. The average computation time for MPC planning is $9.50$ ms and Fig.~\ref{fig:mob} shows the mean MPC planning time for each robot. The low computation time thus allows for real time online planning. 
\begin{figure}[t!]
  \subfloat[]{\includegraphics[scale=0.25]{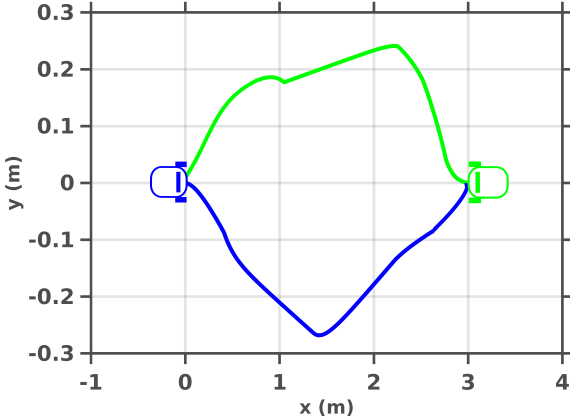}\label{fig:mob2}}\hfill
  \subfloat[]{\includegraphics[scale=0.25]{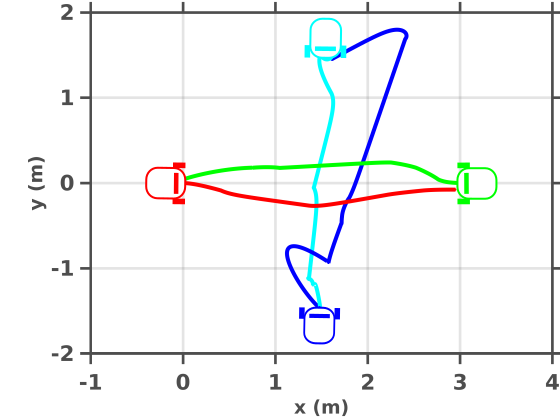}\label{fig:mob4}}\hfill
  \subfloat[]{\includegraphics[scale=0.25]{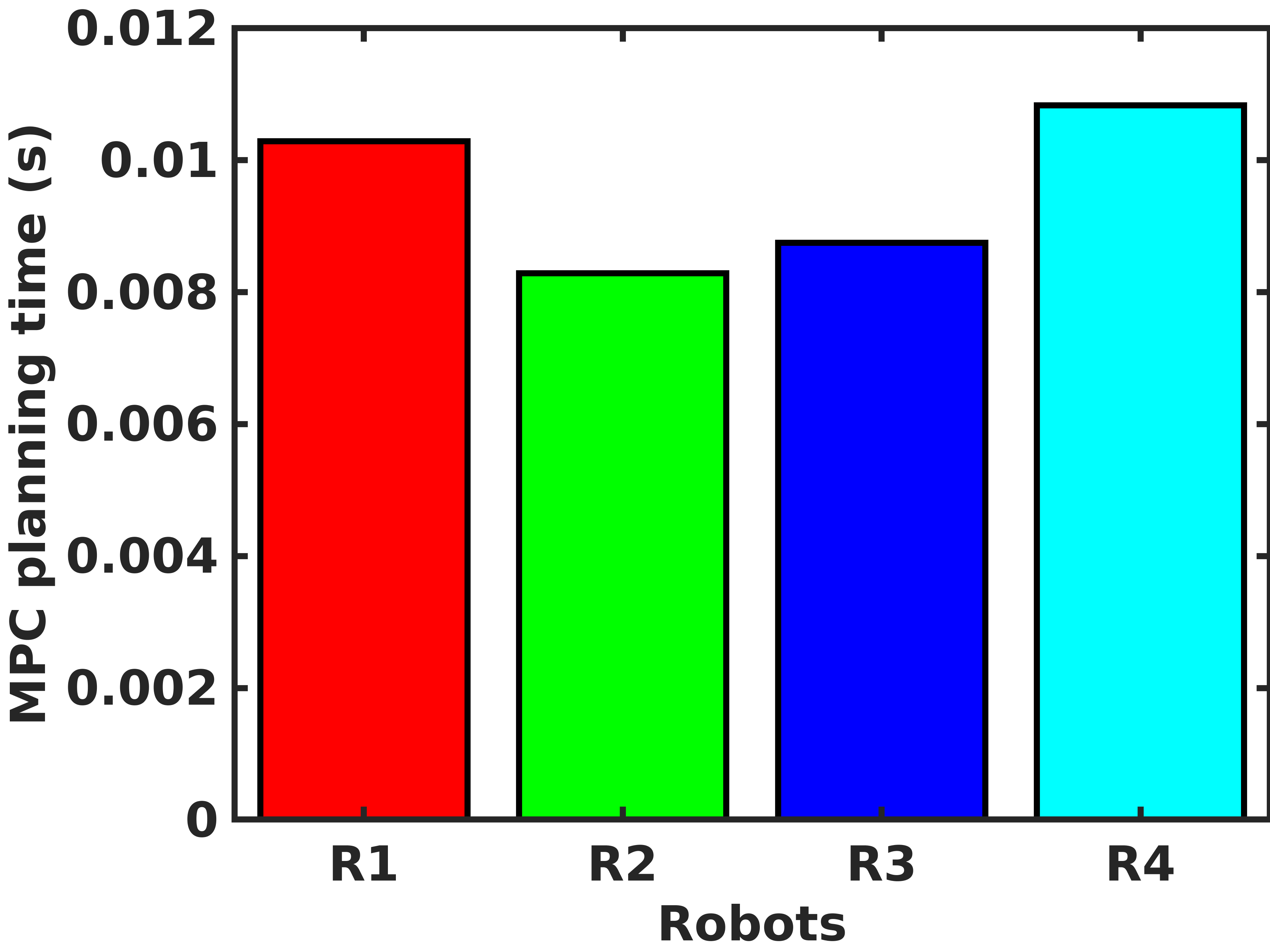}\label{fig:mob}}
   \caption{(a), (b) Simulation results of mobile robots exchanging their positions. The solid lines represent the trajectories executed by the robots. (c) Mean MPC planning time for four robots R1, R2, R3 and R4, respectively.}
  \label{fig:mobile}
\end{figure}

\noindent \textbf{Quadrotor scenarios:}
Similar to the scenario discussed above, here we consider multiple quadrotors exchanging their initial positions. Each quadrotor communicates its trajectory, both pose and covariance, with others. The top view and side view for four and six quadrotors exchanging their initial positions can be seen in Fig.~\ref{fig:qp}. Close distances between quadrotors are observed due to our tight bound. We also observed that a success rate of $100\%$ is achieved for collision probability thresholds below $0.1$. Mean computation time for MPC planning is $3.05$ ms.

\begin{figure}[h]
  \subfloat{\includegraphics[scale=0.35]{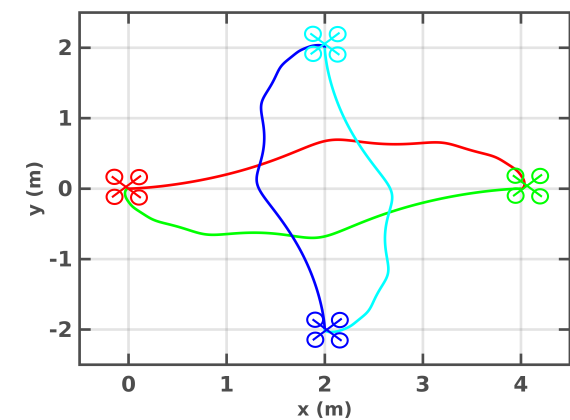}\label{fig:q1}}\hspace{0.2cm}
  \subfloat{\includegraphics[scale=0.35]{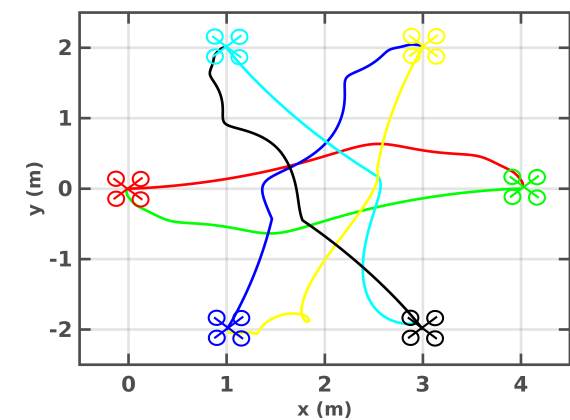}\label{fig:q2}}\\
  \vspace{-0.3cm}
  \clearsubcaptcounter
  \subfloat[Four quadrotors]{\includegraphics[scale=0.33]{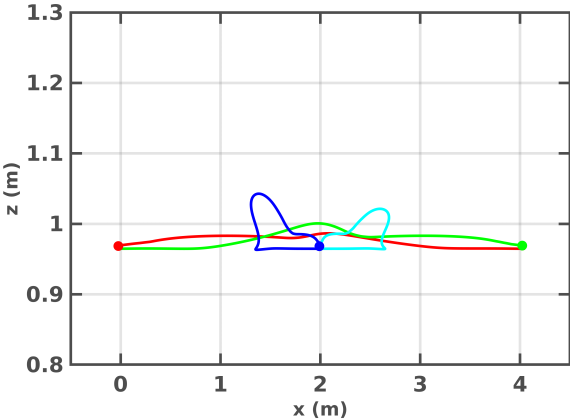}\label{fig:q3}}\hspace{0.4cm}
  \subfloat[Six quadrotos]{\includegraphics[scale=0.33]{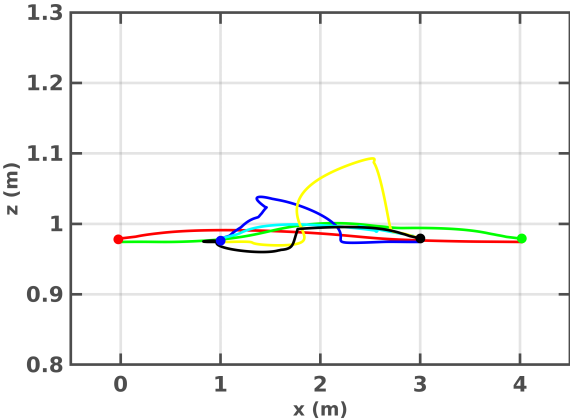}\label{fig:q4}}
   \caption{Simulation results of quadrotors exchanging their positions; trajectories executed visualized by solid lines. The upper plots show the top view (x-y) and the lower plots show the side view (x-z).}
  \label{fig:qp}
\end{figure}
\label{sec:results}

\section{Conclusion}
In this paper we have developed an approach for collision avoidance that incorporates robot and obstacle state uncertainties and is solvable in real time. A tight bound for collision probability is provided and a collision constraint for online MPC planning is derived. The tightness of the bound is validated by computing the numerical integral which provides the actual value. A comparison with several state-of-the-art methods is provided and our method is seen to give a tighter upper bound. We also compare the efficiency of our approach with bounding volume methods under varied measurement noise. Multi-robot simulations with both mobile robots and quadrotors are performed to validate our approach. 

Note that currently we assume spherical geometries for robot and obstacles. Such an assumption is valid since any shape can be enclosed inside a sphere or using multiple spheres. However, a more appropriate assumption is to consider the minimum-volume enclosing ellipsoid~\cite{rimon1997JINT}. For every convex polyhedron, there exists a unique ellipsoid of minimal volume that contains the polyhedron and is called the \textit{L\"{o}wner-John ellipsoid} of the polyhedron~\cite{grotschel1988geometric}. Thus the collision constraint is to be formulated based on the distance between the ellipsoids. Furthermore, in some cases the assumption of Gaussian distribution for robot state and other noises may not hold.

\bibliographystyle{splncs03.bst}
\bibliography{/home/antony/Research_Genoa/References/References}

\begin{thebibliography}{10}
\providecommand{\url}[1]{\texttt{#1}}
\providecommand{\urlprefix}{URL }

\bibitem{aoude2013AR}
Aoude, G.S., Luders, B.D., Joseph, J.M., Roy, N., How, J.P.: Probabilistically
  safe motion planning to avoid dynamic obstacles with uncertain motion
  patterns. Autonomous Robots  35(1),  51--76 (2013)

\bibitem{axelrod2018IJRR}
Axelrod, B., Kaelbling, L.P., Lozano-P{\'e}rez, T.: Provably safe robot
  navigation with obstacle uncertainty. The International Journal of Robotics
  Research  37(13-14),  1760--1774 (2018)

\bibitem{bajcsy2019ICRA}
Bajcsy, A., Herbert, S.L., Fridovich-Keil, D., Fisac, J.F., Deglurkar, S.,
  Dragan, A.D., Tomlin, C.J.: A scalable framework for real-time multi-robot,
  multi-human collision avoidance. In: 2019 international conference on
  robotics and automation (ICRA). pp. 936--943. IEEE (2019)

\bibitem{blackmore2011TRO}
Blackmore, L., Ono, M., Williams, B.C.: Chance-constrained optimal path
  planning with obstacles. IEEE Transactions on Robotics  27(6),  1080--1094
  (2011)

\bibitem{bry2011ICRA}
Bry, A., Roy, N.: Rapidly-exploring random belief trees for motion planning
  under uncertainty. In: IEEE International Conference on Robotics and
  Automation. pp. 723--730 (2011)

\bibitem{chow2017JMLR}
Chow, Y., Ghavamzadeh, M., Janson, L., Pavone, M.: Risk-constrained
  reinforcement learning with percentile risk criteria. The Journal of Machine
  Learning Research  18(1),  6070--6120 (2017)

\bibitem{ding2013ICRA}
Ding, X.C., Pinto, A., Surana, A.: Strategic planning under uncertainties via
  constrained markov decision processes. In: IEEE International Conference on
  Robotics and Automation. pp. 4568--4575 (2013)

\bibitem{dutoit2010ICRA}
Du~Toit, N.E., Burdick, J.W.: Robotic motion planning in dynamic, cluttered,
  uncertain environments. In: 2010 IEEE International Conference on Robotics
  and Automation. pp. 966--973. IEEE (2010)

\bibitem{dutoit2011IEEE}
Du~Toit, N.E., Burdick, J.W.: Probabilistic collision checking with chance
  constraints. IEEE Transactions on Robotics  27(4),  809--815 (2011)

\bibitem{falanga2018IROS}
Falanga, D., Foehn, P., Lu, P., Scaramuzza, D.: {PAMPC: Perception-aware model
  predictive control for quadrotors}. In: 2018 IEEE/RSJ International
  Conference on Intelligent Robots and Systems (IROS). pp. 1--8. IEEE (2018)

\bibitem{frey2020RSS}
Frey, K., Steiner, T., How, J.: {Collision Probabilities for Continuous-Time
  Systems Without Sampling}. In: Proceedings of Robotics: Science and Systems.
  Corvalis, Oregon, USA (July 2020)

\bibitem{fridovich2020IJRR}
Fridovich-Keil, D., Bajcsy, A., Fisac, J.F., Herbert, S.L., Wang, S., Dragan,
  A.D., Tomlin, C.J.: Confidence-aware motion prediction for real-time
  collision avoidance1. The International Journal of Robotics Research
  39(2-3),  250--265 (2020)

\bibitem{grotschel1988geometric}
Gr{\"o}tschel, M., Lov{\'a}sz, L., Schrijver, A.: {Geometric Algorithms and
  Combinatorial Optimization}. Springer-Verlag, New York (1988)

\bibitem{hakobyan2019RAL}
Hakobyan, A., Kim, G.C., Yang, I.: {Risk-Aware Motion Planning and Control
  Using CVaR-Constrained Optimization}. IEEE Robotics and Automation Letters
  4(4),  3924--3931 (2019)

\bibitem{hardy2013TRO}
Hardy, J., Campbell, M.: Contingency planning over probabilistic obstacle
  predictions for autonomous road vehicles. IEEE Transactions on Robotics
  29(4),  913--929 (2013)

\bibitem{hewing2018ECC}
Hewing, L., Liniger, A., Zeilinger, M.N.: Cautious nmpc with gaussian process
  dynamics for autonomous miniature race cars. In: 2018 European Control
  Conference (ECC). pp. 1341--1348. IEEE (2018)

\bibitem{Houska2011OCAM}
Houska, B., Ferreau, H., Diehl, M.: {ACADO} {T}oolkit -- {A}n {O}pen {S}ource
  {F}ramework for {A}utomatic {C}ontrol and {D}ynamic {O}ptimization. Optimal
  Control Applications and Methods  32(3),  298--312 (2011)

\bibitem{janson2018ISRR}
Janson, L., Schmerling, E., Pavone, M.: {Monte Carlo motion planning for robot
  trajectory optimization under uncertainty}. In: Robotics Research, pp.
  343--361. Springer (2018)

\bibitem{jasour2019RSS}
Jasour, A.M., Williams, B.C.: Risk contours map for risk bounded motion
  planning under perception uncertainties. Robotics: Science and Systems
  (2019)

\bibitem{johnson1994truncatedGaussian}
Johnson, N.L., Kotz, S., Balakrishnan, N.: Continuous univariate distributions.
  john wiley\& sons. New York, NY  (1994)

\bibitem{kamel2017IROS}
Kamel, M., Alonso-Mora, J., Siegwart, R., Nieto, J.: Robust collision avoidance
  for multiple micro aerial vehicles using nonlinear model predictive control.
  In: 2017 IEEE/RSJ International Conference on Intelligent Robots and Systems
  (IROS). pp. 236--243. IEEE (2017)

\bibitem{lambert2008ICCARV}
Lambert, A., Gruyer, D., Saint~Pierre, G.: {A fast Monte Carlo algorithm for
  collision probability estimation}. In: 10th IEEE International Conference on
  Control, Automation, Robotics and Vision. pp. 406--411 (2008)

\bibitem{lee2013IROS}
Lee, A., Duan, Y., Patil, S., Schulman, J., McCarthy, Z., Van Den~Berg, J.,
  Goldberg, K., Abbeel, P.: Sigma hulls for gaussian belief space planning for
  imprecise articulated robots amid obstacles. In: IEEE/RSJ International
  Conference on Intelligent Robots and Systems. pp. 5660--5667 (2013)

\bibitem{liu2014ICRA}
Liu, W., Ang, M.H.: Incremental sampling-based algorithm for risk-aware
  planning under motion uncertainty. In: 2014 IEEE International Conference on
  Robotics and Automation (ICRA). pp. 2051--2058 (2014)

\bibitem{luo2017PAS}
Luo, Y.z., Yang, Z.: A review of uncertainty propagation in orbital mechanics.
  Progress in Aerospace Sciences  89,  23--39 (2017)

\bibitem{park2012ICAPS}
Park, C., Pan, J., Manocha, D.: {ITOMP: Incremental trajectory optimization for
  real-time replanning in dynamic environments}. In: Twenty-Second
  International Conference on Automated Planning and Scheduling (2012)

\bibitem{park2018IEEE}
Park, C., Park, J.S., Manocha, D.: {Fast and bounded probabilistic collision
  detection for high-DOF trajectory planning in dynamic environments}. IEEE
  Transactions on Automation Science and Engineering  15(3),  980--991 (2018)

\bibitem{patil2012ICRA}
Patil, S., Van Den~Berg, J., Alterovitz, R.: {Estimating probability of
  collision for safe motion planning under Gaussian motion and sensing
  uncertainty}. In: IEEE International Conference on Robotics and Automation.
  pp. 3238--3244 (2012)

\bibitem{rimon1997JINT}
Rimon, E., Boyd, S.P.: Obstacle collision detection using best ellipsoid fit.
  Journal of Intelligent and Robotic Systems  18(2),  105--126 (1997)

\bibitem{sadigh2016RSS}
Sadigh, D., Kapoor, A.: Safe control under uncertainty with probabilistic
  signal temporal logic. Robotics: Science and Systems  (2016)

\bibitem{salzman2017ICAPS}
Salzman, O., Hou, B., Srinivasa, S.: Efficient motion planning for problems
  lacking optimal substructure. In: Twenty-Seventh International Conference on
  Automated Planning and Scheduling (2017)

\bibitem{schmerling2017RSS}
Schmerling, E., Pavone, M.: {Evaluating Trajectory Collision Probability
  through Adaptive Importance Sampling for Safe Motion Planning}. In:
  Proceedings of Robotics: Science and Systems. Cambridge, Massachusetts (July
  2017)

\bibitem{shimanuki2018WAFR}
Shimanuki, L., Axelrod, B.: {Hardness of 3D Motion Planning Under Obstacle
  Uncertainty}. Workshop on Algorithmic Foundations of Robotics  (2018)

\bibitem{sun2016ISRR}
Sun, W., Torres, L.G., Van Den~Berg, J., Alterovitz, R.: Safe motion planning
  for imprecise robotic manipulators by minimizing probability of collision.
  In: Robotics Research, pp. 685--701. Springer (2016)

\bibitem{thomas2020IRIM}
Thomas, A., Mastrogiovanni, F., Baglietto, M.: {Motion Planning with
  Environment Uncertainty}. In: Italian Conference on Robotics and Intelligent
  Machines (I-RIM) (2020)

\bibitem{thomas2021ISR}
Thomas, A., Mastrogiovanni, F., Baglietto, M.: {An Integrated Localization,
  Motion Planning and Obstacle Avoidance Algorithm in Belief Space}.
  Intelligent Service Robotics pp. 1--16 (2021)

\bibitem{zhu2019RAL}
Zhu, H., Alonso-Mora, J.: Chance-constrained collision avoidance for mavs in
  dynamic environments. IEEE Robotics and Automation Letters  4(2),  776--783
  (2019)

\end{thebibliography}
\end{document}